\def\ps@pprintTitle{%
 \let\@oddhead\@empty
 \let\@evenhead\@empty
 \def\@oddfoot{}%
 \let\@evenfoot\@oddfoot}
\theoremstyle{plain}
\newtheorem{theorem}{Theorem}[section]
\newtheorem{proposition}[theorem]{Proposition}
\theoremstyle{definition}
\newtheorem{definition}{Definition}[section]
\newtheorem{example}{Example}[section]
\newtheorem{remark}{Remark}[section]
\newcommand{\N}{\mathbb{N}}
\newcommand{\R}{\mathbb{R}}
\newcommand{\Parts}[1]{\mathcal{P}\left(#1\right)}
\newcommand{\Set}[1]{\lbrace #1 \rbrace}
\newcommand{\DownSets}[1]{\mathcal{O}\left(#1\right)}
\newcommand{\cl}[1]{\overline{#1}}
\newcommand{\clOp}{\overline{\mbox{}\cdot\mbox{}}}
\newcommand{\clx}[1]{\llbracket #1 \rrbracket}
\newcommand{\clxOp}{\llbracket\cdot\rrbracket}
\newcommand{\mcl}[1]{\mu_{n,\ell}\left(#1\right)}
\newcommand{\mclOp}{\mu_{n,\ell}}
\newcommand{\mclIter}[2]{\mu_{n,\ell}^{#2}\left(#1\right)}
\newcommand{\Ixo}[1]{\mathcal{R}_{I,#1}}
\newcommand{\Pxo}[1]{\mathcal{R}_{P,#1}}
\newcommand{\SC}[1]{\text{SC$_{\ell,#1}$}}
\newcommand{\Pop}{\mathbf{P}}
\newcommand{\rx}[1]{r_x\left(#1\right)}
\newcommand{\rxOp}{r_x}
\newcommand{\alt}[1]{\text{alt}\left({#1}\right)}
\newcommand{\ie}{i.e.,\xspace}
\newcommand{\eg}{e.g.,\xspace}
\newcommand{\wrt}{w.r.t.\xspace}
\newcommand{\Cech}{\v{C}ech\xspace}
\title{A Distance Between Populations\\ for $n$-Points Crossover in Genetic Algorithms}
\begin{document}

\author[isegi]{Mauro Castelli}
\ead{mcastelli@novaims.unl.pt}

\author[disco]{Gianpiero Cattaneo}
\ead{cattang@live.it}

\author[disco]{Luca Manzoni}
\ead{luca.manzoni@disco.unimib.it}

\author[isegi]{Leonardo Vanneschi}
\ead{lvanneschi@novaims.unl.pt}

\address[isegi]{%
  NOVA IMS, Universidade Nova de Lisboa, 1070-312 Lisboa, Portugal}

\address[disco]{%
  Dipartimento di Informatica, Sistemistica e Comunicazione,\\
  Universit\`a degli Studi di Milano-Bicocca, Milano, Italy}

\begin{abstract}
  Genetic algorithms (GAs) are an optimization technique that has been successfully used on many real-world problems. There exist different approaches to their theoretical study. In this paper we complete a recently presented approach to model one-point crossover using pretopologies (or \Cech topologies) in two ways. First, we extend it to the case of $n$-points crossover. Then, we experimentally study how the distance distribution changes when the number of crossover points increases.
\end{abstract}

\maketitle

\section{Introduction}
\label{sec:introduction}

The usual approach to the study of genetic algorithms (GAs) is to model their dynamics either using some simple kind of
crossover, like the one-point crossover, or without focusing on the difference given by using different kinds of
crossover (see, for instance~\cite{RR02} for a comprehensive overview). Several models have appeared so far that study
the dynamics under one-point crossover or in which the crossover does not play a significant role. The most prominent
example is the work of Vose and his coworkers~\cite{Vo98,Vo10}, that is based on modeling GA -- with selection,
mutation, and crossover -- as a deterministic system, under the hypothesis of an infinite population.

Topology has often been considered an important concept in the study of GAs. In the work of
Moraglio~\cite{MP04,Mo10,Mo11}, the crossover and mutation operators are defined and studied in terms of
topology, generally induced by a metric space, making the model applicable to a wide range of evolutionary computation
techniques. In this model, some general results on crossover are reached, for example on the points that can be
generated by a \emph{geometric} crossover (a condition respected by many crossovers used in practice). Among the
possible approaches, the definition of an operator-based distance for evolutionary computation techniques is an
important step in analyzing some aspects of the dynamics of the evolutionary algorithm~\cite{MO11b}. It is useful, for
example, for computing indicators of problem difficulty (\eg fitness distance correlation~\cite{JF95,Va04,TV05}). Pretopologies, instead of topologies, have been also used to study the dynamics of GAs. The first use of pretopologies to model crossover is due to Stadler, Wagner and coworkers~\cite{SW97,SS02}, where a connection with hypergraphs is also made.

The most studied type of crossover is the one-point crossover. Modeling only one crossover point is, of course,
unsatisfactory, since GAs are inspired by real biological processes in which more than one crossover point exists.
Furthermore, since GAs are commonly used in optimization, it is appropriate to study operators that generalize one-point
crossover. The study of these operators results in a better understanding of the impact of the used type of crossover on the performance of GAs. Of course, the generalization of one-point crossover to $n$-points crossover is not a trivial, both computationally and from the complexity of the mathematical formalization perspective.

As previously noted, an important mathematical tool, also used in the case of one point crossover, is the notion of metric space, obviously induced from an appropriate distance. This observation led to the study and definition of a
population-based crossover distance for one-point crossover~\cite{MV12}. This work is a natural generalization
of~\cite{MV12} to $n$-points crossover. While a distance ``consistent'' with traditional GAs mutation is the Hamming
one, defining a distance ``consistent'' with crossover is clearly more difficult. In fact, differently from mutation,
when an individual is fixed, the result of a crossover operation depends on the entire population.

When used as an optimization algorithm, GAs are inherently stochastic. However, there is a lower bound to the number of
generations needed to transform a given population into another one. This lower bound can be found in a deterministic
way, selecting an ``optimal'' sequence of populations. The length of this sequence allows us to determine what is the
minimum number of generations needed for passing from a population to another. This lower bound also holds for the commonly used stochastic version of GA. In order to find this lower bound, we use sets of populations, while, as said above, standard GAs have a single population that, differently from here, is modified by the genetic operators in a stochastic manner.
In other words, in this stochastic context, one obtains, after one step, different populations with different
probabilities. The two dynamics are related in the sense that the deterministic process that we study keeps track of all
reachable population (\ie the ones that have a positive probability of being obtained) with the aim of determining when
a population becomes reachable. Here we are not proposing a new version of GA for optimization, but a method to single
out the dynamics of existing GAs when all the stochastic choices are, in some sense, optimal. We have experimentally verified that a number of different crossover points can change, on average, this bound. When the number of crossover points increases there is a measurable reduction in the average distance between two populations. This result is a stimulus to continue the investigation of the effect that different types of crossovers can have on the dynamics of the population.

This work is structured as follows. In Section~\ref{sec:basic_notions} the basic model definitions and the mathematical
notions needed in the subsequent parts of the paper are recalled. In Section~\ref{sec:crossover_relation} we give a
generalization to the $n$-points crossover of the model introduced in~\cite{MV12}. The main results are presented in
Section~\ref{sec:representation_for_populations}. In Section~\ref{sec:experiments} we experimentally study how the distance distribution varies for different numbers of crossover points. The paper ends with the final remarks of Section~\ref{sec:further_remarks} and with some proposals for possible future works.

\section{Basic Notions}
\label{sec:basic_notions}
In this section, some basic notations and notions about lattices and closure operators are introduced.

We denote by $[i,j]$ (resp. $(i,j)$) with $i,j \in \N$ the set $\Set{i,i+1,\ldots,j-1,j} \subseteq \N$ (resp.
$\Set{i+1,\ldots,j-1}$). The meaning of $[i,j)$ and $(i,j]$ is the immediate extension of the previous notation. For a
fixed $\ell \in \N$, we denote by $\SC{1}$ the set $\Set{[i,j] \; | \; 1 \le i \le j \le \ell}$. In a previous
paper~\cite{MV12} it has been proved that $\SC{1}$ is a lattice \wrt the set inclusion. Also, for every $n \in \N$ with
$n>1$, we denote by $\SC{n}$ the set of all subsets of $[1,\ell]$ that can be written as the union of a set of
$\SC{n-1}$ and a set of $\SC{1}$.

A finite alphabet will be denoted by $\Sigma$. The set of all the strings of a given length $\ell$ composed of symbols
from $\Sigma$ is denoted by $\Sigma^\ell$. An element $x \in \Sigma^\ell$ is denoted by $x_1,\ldots,x_\ell$. The
notation $x_{[i,j]}$ is a short-cut for $x_i,x_{i+1},\ldots,x_{j-1},x_j$.

A function $f$ between two partially ordered sets $A$ and $B$ is said to be \emph{isotone} or \emph{order-preserving}
iff $\forall a,b \in A\; a \le b \Rightarrow f(a) \le f(b)$.

Recall that a \emph{lattice} $\mathcal{L}$ is a non-empty set $L$ endorsed with a partial ordering $\le_L$ such that for
any two elements $a,b \in L$ the \emph{join} $a \vee b$ (\ie the least upper bound of $a$ and $b$) and the \emph{meet}
$a \wedge b$ (\ie the greatest lower bound of $a$ and $b$) operators are uniquely defined in $L$~\cite{Bi67}. A lattice
is \emph{bounded} if $\bigvee L$ (\ie the maximal element for $L$) and $\bigwedge L$ (\ie the minimal element for $L$)
exist. A lattice is \emph{complete} if for every subset $S$ of $L$ then both $\bigvee S$ and $\bigwedge S$ exist. Note
that every finite lattice is complete.

Given a partially ordered set (poset) $\mathcal{L} = (L,\le_L)$ a subset $O$ of $L$ is a \emph{lower set} if for all $x
\in L$ condition ``there exists $y \in O$ with $x \le_L y$'' implies ``$x \in O$''. The set of all lower sets of a poset
$\mathcal{L}$ is denoted by $\DownSets{\mathcal{L}}$ and it is a complete lattice with respect to the set inclusion.
See~\cite{Bi67} for a reference on lattices.

A \emph{\Cech closure}~\cite{Ce66} on a set $X$ is a function on the power set of $X$, $\clOp:\Parts{X}\mapsto\Parts{X}$, such that:
\begin{enumerate}
  \item $\cl{\emptyset} = \emptyset$
  \item $\forall A \subseteq X \quad A \subseteq\cl{A}$
  \hfill \emph{(monotonicity)}
  \item $\forall A,B \subseteq X \quad \cl{A} \cup
  \cl{B} = \cl{A \cup B}$ \hfill
  \emph{(additivity)}
\end{enumerate}
Recall that a \emph{Kuratowski closure} is a \Cech closure with the following additional condition:
\begin{enumerate}
  \item[4.] $\forall A \subseteq X \; \cl{A} =
  \cl{\cl{A}}$ \hfill \emph{(idempotency)}
\end{enumerate}
The Kuratowski closure is one of the ways to define a topology on $X$~\cite{Mu99}. A \Cech closure can be iterated, defining
$\cl{A}^{i}$ with $i \in \N$ as follows:
\begin{equation}
\cl{A}^i =
\begin{cases}
  \cl{\cl{A}^{i-1}} & \text{if } i \ne 0\\
  A & \text{otherwise}
\end{cases}
\label{eq:iterated-closure}
\end{equation}
When $X$ is finite the function $\clxOp:\Parts{X}\to\Parts{X}$ defined as $\clx{A} = \bigcup_{i \in \N}\cl{A}^i$ is a
\emph{Kuratowski closure}.

\section{An Extension of  the Model  to $n$-Points Crossover}
\label{sec:crossover_relation}
In this section, we extend to $n$-points crossover the one-point model presented in~\cite{MV12}. In order to keep this work self-contained, we present the adapted definitions even when they remain similar to the ones already existing
for one-point crossover. The proofs of the propositions of this section can be obtained by a generalization of the proofs of~\cite{MV12}.

The model that we are going to define is based on the idea that, given two populations $P_1$ and $P_2$, with $P_2$ reachable from $P_1$, of a GA it is possible to count the minimum number of generations needed to transform $P_1$ in $P_2$ using only crossover operations. Hence, we decided to not consider, for now, the fact that GA has an essential stochastic component. The semantics of the two populations $P_1$ and $P_2$ is the following: the first population, $P_1$, is the initial
population of the GA and the second one, $P_2$, is the target - or final - population, containing one optimal solution.
Hence, the minimum number of generations needed to go from $P_1$ to $P_2$ represents a lower bound on the number of generations needed by a GA (even when it has a stochastic component) to reach an optimal solution.

\subsection{Crossover relation}

A first step in the introduction of our simplified model of GA with $n$-points crossover is the definition of a
\emph{crossover relation}. The simplified aspect of this model is that a population is any possible subset of strings of
a fixed length $\ell$ over an alphabet $\Sigma$, in which both the fixed population size and the presence of duplicate
elements are ignored.

\begin{definition}
  A $n$-points crossover relation (for $n \in [1,\ell)$) is a binary relation $\Ixo{n}$ over $\Sigma^\ell \times \Sigma^\ell$ such that:\\
  $\forall x,y,x',y' \in \Sigma^\ell$, $(x,y) \Ixo{n} (x',y')$ iff $\exists k_0=0,k_1,\ldots,k_n,k_{n+1}=\ell \in \N$ (not necessarily all distinct) such that $\forall i \in [0,n]$:
  \[
  x'_{(k_i,k_{i+1}]} =
  \begin{cases}
    x_{(k_i,k_{i+1}]} & \text{if } i \text{ is even}\\
    y_{(k_i,k_{i+1}]} & \text{otherwise}
  \end{cases}
  \]
  and
  \[
  y'_{(k_i,k_{i+1}]} =
  \begin{cases}
    y_{(k_i,k_{i+1}]} & \text{if } i \text{ is even}\\
    x_{(k_i,k_{i+1}]} & \text{otherwise}
  \end{cases}
  \]
\end{definition}

In the relation $\Ixo{n}$ the symbol $I$ refers to ``individuals''.

For the case of one-point crossover (\ie $n=1$) this definition is exactly the one presented in~\cite{MV12}.
Intuitively, we have that two pairs of elements of $\Sigma^\ell$ are in relation \wrt this definition if the second pair
can be obtained from the first one using one $n$-point crossover operation. The relation $\Ixo{n}$ is reflexive and
symmetric but not transitive (\ie following~\cite{Po800,Po903}, it is a \emph{similarity relation}).

\begin{example}
  \label{ex:xo_rel}
  Let consider $\Sigma = \{0,1\}$ and $\ell = 6$. An example of two pairs of strings in relation \wrt $4$-points crossover is the following:
  \[
  \left(
    \begin{array}{cccccc}
      0 & 1 & 0 & 0 & 0 & 1\\
      1 & 0 & 1 & 1 & 0 & 0
    \end{array}
  \right)
  \Ixo{4}
  \left(
    \begin{array}{c|c|c|cc|c}
      0 & 0 & 0 & 1 & 0 & 1\\
      1 & 1 & 1 & 0 & 0 & 0
    \end{array}
  \right)
  \]
  Notice that the relation $\Ixo{4}$ is not transitive. For example the pair
  \[
  \left(
    \begin{array}{cccccc}
      0 & 0 & 0 & 0 & 0 & 0\\
      1 & 1 & 1 & 1 & 1 & 1
    \end{array}
  \right)
  \Ixo{4}
  \left(
    \begin{array}{c|c|c|c|cc}
      0 & 1 & 0 & 1 & 0 & 0\\
      1 & 0 & 1 & 0 & 1 & 1
    \end{array}
  \right)
  \]
  and the pair
  \[
  \left(
    \begin{array}{cccccc}
      0 & 1 & 0 & 1 & 0 & 0\\
      1 & 0 & 1 & 0 & 1 & 1
    \end{array}
  \right)
  \Ixo{4}
  \left(
    \begin{array}{cccc||c|c|}
      0 & 1 & 0 & 1 & 0 & 1\\
      1 & 0 & 1 & 0 & 1 & 0
    \end{array}
  \right)
  \]
  are both in the relation $\Ixo{4}$ (notice that the first two crossover points - denoted by a double line - coincide, i.e., $k_1 = k_2$), but the pair
  \[
  \left(
    \begin{array}{cccccc}
      0 & 0 & 0 & 0 & 0 & 0\\
      1 & 1 & 1 & 1 & 1 & 1
    \end{array}
  \right)
  \text{ and }
  \left(
    \begin{array}{cccccc}
      0 & 1 & 0 & 1 & 0 & 1\\
      1 & 0 & 1 & 0 & 1 & 0
    \end{array}
  \right)
  \]
  is not in the relation $\Ixo{4}$.
\end{example}
Notice that the choice of starting with a swap on the first interval or in the second one (i.e., at odd or even crossover points), is actually immaterial. That is, in the first case of the previous example we would have:
\[
  \left(
    \begin{array}{cccccc}
      0 & 1 & 0 & 0 & 0 & 1\\
      1 & 0 & 1 & 1 & 0 & 0
    \end{array}
  \right)
  \Ixo{4}
  \left(
    \begin{array}{c|c|c|cc|c}
      1 & 1 & 1 & 0 & 0 & 0\\
      0 & 0 & 0 & 1 & 0 & 1
    \end{array}
  \right)
\]
That this, the two individuals obtained are still the same.

This relation has been extended to the power set of $\Sigma^\ell$ as follows.

\begin{definition}
  A $n$-point crossover relation $\Pxo{n}$ over $\Pop = \Parts{\Sigma^\ell}$ is a relation such that $\forall P_1,P_2 \in \Pop$:
  \[
  \begin{array}{rcl}
    P_1 \Pxo{n} P_2  & \Leftrightarrow & \forall x' \in P_2 \; \exists y' \in \Sigma^\ell \; \exists x,y \in P_1\\
    & & \text{s.t } (x,y) \Ixo{n} (x',y')
  \end{array}
  \]
\end{definition}

In the relation $\Pxo{n}$ the symbol $P$ refers to ``populations''.

For $n=1$ this definition is the same as the one given in~\cite{MV12}. Intuitively, two sets are in the relation $\Pxo{n}$ if every element of the second set can be obtained by using $n$-point crossover operations starting from elements of the first set. It is immediate that $\Pxo{n}$ is reflexive, but neither symmetric nor transitive. However, the following property holds:\\
\[
\forall P_1,P_2 \in \Pop, P_1 \Pxo{n} P_2 \text{ implies that } \forall P'_1 \supseteq P_1 \text{ and } \forall P'_2 \subseteq P_2, P'_1 \Pxo{n} P'_2\qquad.
\]
In order to clarify this property, let us discuss the
following example:
\begin{example}
  Let $P_1, P_2 \subseteq \{0,1\}^3$ be:
  \begin{align*}
    & P_1 = \lbrace (0,1,0), (1,0,1) \rbrace
    & P_2 = \lbrace (1,1,1), (0,0,0) \rbrace
  \end{align*}
  and $P_1',P_2' \subseteq \{0,1\}^n$ be:
  \begin{align*}
    & P_1' = \lbrace (0,1,0), (1,0,1), (1,1,0) \rbrace
    & P_2' = \lbrace (0,0,0) \rbrace
  \end{align*}
  When considering only two-points crossover one obtains $P_1 \Pxo{2} P_2$. That is, with only application of two-points crossover it is possible to transform the population $P_1$ into $P_2$. We also have that $P_1 \Pxo{2} P_2'$, since $P_2'$ contains fewer elements than $P_2$. Since the addition of new genetic material in $P_1'$ with respect to $P_1$ does not impede the generation of the individuals that were already possible to generate by $P_1$, we have that $P_1' \Pxo{2} P_2'$, as desired.
\end{example}

The main idea is to define a \Cech closure according to Equation~\eqref{eq:iterated-closure} over $\Pop$ such that $\forall P \in \Pop$ and $\forall i \in \N$, $\cl{P}^i$
is the set of populations that can be obtained from $P$ after $i$ generations using only the crossover as a genetic
operator. To satisfy those requirements we defined a closure $\clxOp$ such that $\forall P_1,P_2 \in \Pop$:
\begin{enumerate}
  \item $P_2 \in \clx{P_1}$ iff $P_2$ can be obtained by using only crossover operations from $P_1$.
  \item If $P_2 \in \clx{P_1}$, the minimal $i \in \N$ such that $\clx{P_1} = \cl{P}^i$ is also the minimal number of generations needed to obtain $P_2$ from $P_1$.
\end{enumerate}

Such a closure, defined in~\cite{MV12} for one-point
crossover, is here generalized as follows.

\begin{definition}
  The \emph{crossover closure} for $n$-point crossover is a function $\clOp : \Parts{\Pop} \to \Parts{\Pop}$ defined, for every $A \subseteq \Pop$ as:
  \begin{enumerate}
    \item When $A = \emptyset$, $\cl{A} = \emptyset$.
    \item When $A = \Set{P}$, $\cl{\Set{P}} = \Set{P'
      \in \Pop \;|\; P \Pxo{n} P'}$.
    \item Otherwise, $\cl{A} = \bigcup_{P \in A}
    \cl{\Set{P}}$.
  \end{enumerate}
\end{definition}

The following two propositions, as a generalization of the corresponding results relative to the case for $n=1$~\cite{MV12}, holds for a \emph{crossover closure}.

\begin{proposition}
  The crossover closure is a \Cech closure.
\end{proposition}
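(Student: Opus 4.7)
The plan is to verify the three defining axioms of a \Cech closure directly from the three cases of the definition, leveraging the reflexivity of $\Pxo{n}$ noted just after Definition of $\Pxo{n}$.

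The first axiom, $\cl{\emptyset} = \emptyset$, is immediate from case~1 of the definition and requires no work. For the second axiom (monotonicity, \ie $A \subseteq \cl{A}$), I would fix an arbitrary $P \in A$ and show $P \in \cl{A}$. If $A = \Set{P}$ then case~2 gives $\cl{A} = \Set{P' \in \Pop \mid P \Pxo{n} P'}$, and since $\Pxo{n}$ is reflexive we have $P \Pxo{n} P$, hence $P \in \cl{\Set{P}}$. If $A$ contains more than one element, case~3 reduces the claim to $P \in \cl{\Set{P}} \subseteq \bigcup_{Q \in A} \cl{\Set{Q}}$, which is the previous argument. So the key lemma to cite here is the reflexivity of $\Pxo{n}$.

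For additivity, $\cl{A} \cup \cl{B} = \cl{A \cup B}$, the argument is essentially a bookkeeping exercise on the three cases of the definition. If either $A$ or $B$ is empty the equality collapses to $\cl{A} = \cl{A}$ (or the symmetric statement) using case~1. In all remaining cases both sides can be written as a union of singleton-closures, using case~2 on singletons and case~3 otherwise; the identity
\[
\bigcup_{P \in A \cup B} \cl{\Set{P}} \;=\; \bigcup_{P \in A} \cl{\Set{P}} \;\cup\; \bigcup_{P \in B} \cl{\Set{P}}
\]
is a standard set-theoretic identity and finishes the proof.

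No step is genuinely difficult here: the three clauses of the definition are tailored to match the three axioms, and the only nontrivial input is the reflexivity of $\Pxo{n}$ used for monotonicity. If anything is worth flagging, it is that idempotency (the fourth Kuratowski axiom) is deliberately \emph{not} claimed, which is consistent with the fact that the iterated closure $\clxOp = \bigcup_{i \in \N} \cl{\cdot}^i$ must be taken later to recover a topology; this explains why we only prove the \Cech, not the Kuratowski, axioms at this stage.
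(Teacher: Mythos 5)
Your proof is correct and follows exactly the route the paper intends: the paper omits the argument, deferring to the one-point case of~\cite{MV12}, and that argument is precisely this direct verification of the three axioms, with reflexivity of $\Pxo{n}$ supplying extensivity and the case analysis on singletons handling additivity. Your closing remark that idempotency is deliberately not claimed (hence \Cech rather than Kuratowski) is also consistent with the paper's later construction of $\clxOp$.
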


\begin{proposition}
  \label{prop:sequence}
  For all $P_1,P_2 \in \Pop$ and for all $k \in \N$, $P_2 \in \cl{\Set{P_1}}^k$ iff there exists $Q_0=P_1,Q_1,\ldots,Q_{k-1},Q_k=P_2 \in \Pop$ such that $\forall i \in [0,k)\;Q_i\Pxo{n} Q_{i+1}$.
\end{proposition}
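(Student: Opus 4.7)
The plan is to prove both directions by induction on $k \in \N$, leveraging directly the definition of the iterated closure from Equation~\eqref{eq:iterated-closure} and the additivity clause (item 3) in the definition of the crossover closure.

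For the base case $k = 0$, Equation~\eqref{eq:iterated-closure} gives $\cl{\Set{P_1}}^0 = \Set{P_1}$, so $P_2 \in \cl{\Set{P_1}}^0$ iff $P_2 = P_1$, which matches the right-hand side with the trivial one-term sequence $Q_0 = P_1 = P_2$ (vacuously satisfying the condition on $i \in [0,0)$). For the base case $k = 1$, we have $\cl{\Set{P_1}}^1 = \cl{\Set{P_1}} = \Set{P' \in \Pop \mid P_1 \Pxo{n} P'}$ by item 2 of the crossover closure definition, which is exactly the right-hand side in this case.

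For the inductive step, assume the equivalence holds for $k-1$ and consider $\cl{\Set{P_1}}^k = \cl{\cl{\Set{P_1}}^{k-1}}$. The key observation is that the additivity clause of the crossover closure yields
\[
\cl{\cl{\Set{P_1}}^{k-1}} \;=\; \bigcup_{Q \in \cl{\Set{P_1}}^{k-1}} \cl{\Set{Q}} \;=\; \bigcup_{Q \in \cl{\Set{P_1}}^{k-1}} \Set{P' \in \Pop \mid Q \Pxo{n} P'}.
\]
Hence $P_2 \in \cl{\Set{P_1}}^k$ iff there exists some $Q_{k-1} \in \cl{\Set{P_1}}^{k-1}$ with $Q_{k-1} \Pxo{n} P_2$. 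By the inductive hypothesis, the existence of such a $Q_{k-1}$ is equivalent to the existence of a sequence $Q_0 = P_1, Q_1, \ldots, Q_{k-1}$ with $Q_i \Pxo{n} Q_{i+1}$ for all $i \in [0, k-1)$. Setting $Q_k = P_2$ then gives the desired length-$k$ sequence; conversely, any such sequence furnishes a witness $Q_{k-1}$ for the union above.

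There is no real obstacle here: the argument is essentially an unfolding of definitions. The one subtlety worth care is the correct handling of the base case and the use of the additivity axiom to decompose $\cl{A}$ for a non-singleton $A$, which is precisely what allows the recursion on $k$ to mirror the concatenation of the chain $Q_0, \ldots, Q_k$. The fact that the crossover closure is \Cech (established in the preceding proposition) is not explicitly needed, only its defining clauses.
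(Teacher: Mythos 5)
Your argument is correct: the induction on $k$, unfolding Equation~\eqref{eq:iterated-closure} together with the decomposition $\cl{A} = \bigcup_{P \in A}\cl{\Set{P}}$, is exactly the natural proof of this statement, and it matches the route the paper implicitly relies on (the paper omits the proof, deferring to the generalization of the one-point-crossover argument of the cited earlier work). The only cosmetic remark is that your separate base case $k=1$ is redundant, since it already follows from $k=0$ and the inductive step.
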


Intuitively, the previous proposition states that verifying that a population $P_2$ is inside the $k^\text{th}$
iteration of the closure of a population $P_1$ is the same as verifying if it is possible to obtain $P_2$ starting from
$P_1$ in $k$ generations using only crossover operations.

We are now going to show that from the closure of a population $P_1$ it is always possible to find a particular
population $P'$ such that $\cl{\Set{P_1}}^2 = \cl{\Set{P'}}$. That is, we can always focus on considering closures of
singletons. We define, $\forall P \in \Pop$ and $\forall i \in \N$, the set $S_i(P) \in \Pop$ as follows:
\[
S_i(P) =
\begin{cases}
  P & \text{if } i = 0\\
  \bigcup \cl{\Set{S_{i-1}(P)}} & \text{otherwise}
\end{cases}
\]

The following proposition links the iteration of the closure with the sequence $S_0(P),S_1(P),\ldots$.

\begin{proposition}
  For all $P_1,P_2 \in \Pop$ such that $\exists i \in \N$ with $P_2 \in \cl{\Set{P_1}}^i$ the following holds:
  \[
  \min\lbrace i \in\ \N \;|\; P_2 \in \cl{\Set{P_1}}^i \rbrace =
  \begin{cases}
    1 & \text{if } P_2 \subset P_1\\
    \min\lbrace i \in \N \;|\; P_2 \subseteq S_i(P_1)\rbrace & \text{otherwise}
  \end{cases}
  \]
\end{proposition}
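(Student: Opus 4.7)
The plan is to split on the two cases in the statement, and for each to verify directly that the left-hand side equals the displayed value. The case $P_2 \subset P_1$ (strict) is easy: $P_2 \neq P_1$ forces $P_2 \notin \cl{\Set{P_1}}^0 = \Set{P_1}$, so the minimum is at least $1$; and the reflexivity $P_1 \Pxo{n} P_1$ together with the downward/upward monotonicity property recalled before the definition of the crossover closure ($P_1 \Pxo{n} P_2$ and $P_1' \supseteq P_1$, $P_2' \subseteq P_2$ imply $P_1' \Pxo{n} P_2'$) immediately yields $P_1 \Pxo{n} P_2$, so the minimum is exactly $1$.

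For the main case $P_2 \not\subseteq P_1$, the key auxiliary fact I would prove first is that $S_{j-1}(P_1) \Pxo{n} S_j(P_1)$ for every $j \ge 1$. Indeed, by definition $S_j(P_1) = \bigcup \cl{\Set{S_{j-1}(P_1)}}$, so every $x' \in S_j(P_1)$ lies in some $Q$ with $S_{j-1}(P_1) \Pxo{n} Q$, which by unwinding the definition of $\Pxo{n}$ furnishes the required $x, y \in S_{j-1}(P_1)$ and $y' \in \Sigma^\ell$ with $(x,y) \Ixo{n} (x', y')$. With this in hand, the two inequalities follow cleanly. For $(\le)$, suppose $P_2 \subseteq S_i(P_1)$; applying the monotonicity property to $S_{i-1}(P_1) \Pxo{n} S_i(P_1)$ with $P_2 \subseteq S_i(P_1)$ gives $S_{i-1}(P_1) \Pxo{n} P_2$, so the chain
\[
P_1 = S_0(P_1) \Pxo{n} S_1(P_1) \Pxo{n} \cdots \Pxo{n} S_{i-1}(P_1) \Pxo{n} P_2
\]
combined with Proposition~\ref{prop:sequence} shows $P_2 \in \cl{\Set{P_1}}^i$.

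For $(\ge)$, suppose $P_2 \in \cl{\Set{P_1}}^i$ with $i$ minimal. By Proposition~\ref{prop:sequence} there is a chain $P_1 = Q_0 \Pxo{n} Q_1 \Pxo{n} \cdots \Pxo{n} Q_i = P_2$. I then prove by induction on $j$ that $Q_j \subseteq S_j(P_1)$. The base case $j = 0$ is trivial; for the step, assuming $Q_j \subseteq S_j(P_1)$, monotonicity applied to $Q_j \Pxo{n} Q_{j+1}$ yields $S_j(P_1) \Pxo{n} Q_{j+1}$, whence $Q_{j+1} \in \cl{\Set{S_j(P_1)}}$ and therefore $Q_{j+1} \subseteq \bigcup \cl{\Set{S_j(P_1)}} = S_{j+1}(P_1)$. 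At $j=i$ this gives $P_2 \subseteq S_i(P_1)$, so the minimum on the right is at most the minimum on the left.

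The main obstacle I anticipate is purely bookkeeping: confirming that the auxiliary sequence $S_j(P_1)$ is actually ``reachable in $j$ steps'' in the sense of $\Pxo{n}$ even though it is defined as a union of one-step-reachable populations. Once the lemma $S_{j-1}(P_1) \Pxo{n} S_j(P_1)$ is in place, the rest is just an induction and two applications of the already-noted monotonicity of $\Pxo{n}$ under enlargement of the source and shrinking of the target. A small care-point is to keep the strict-subset convention in the first case so that $P_2 = P_1$ is handled by the ``otherwise'' branch via $S_0(P_1) = P_1$, which correctly returns $0$.
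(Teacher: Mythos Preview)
Your argument is correct. The case split is handled properly, and the key auxiliary fact $S_{j-1}(P_1)\Pxo{n} S_j(P_1)$ together with the monotonicity of $\Pxo{n}$ in both arguments gives exactly the two inequalities you need; the induction $Q_j\subseteq S_j(P_1)$ is clean and the edge case $P_2=P_1$ is correctly absorbed into the ``otherwise'' branch with value $0$.

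As for comparison: the paper does not actually prove this proposition in the text. At the start of Section~\ref{sec:crossover_relation} it declares that ``the proofs of the propositions of this section can be obtained by a generalization of the proofs of~\cite{MV12}'' and leaves it at that. So there is no in-paper argument to set yours against; your write-up is precisely the kind of direct verification the paper gestures toward, and it uses only the ingredients the paper makes available (reflexivity of $\Pxo{n}$, the source-enlargement/target-shrinking monotonicity, Proposition~\ref{prop:sequence}, and the definition of $S_i$).
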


The previous proposition intuitively states that it is possible to know the minimum number of generations needed to
obtain a population from another by only considering a \Cech closure of a particular singleton set.

\subsection{Distance Definition}

From the previous definitions, we have the elements to define a metric between populations. The definitions
of~\cite{MV12} can be easily adapted to the $n$-points crossover case.

Let $k^* = \min\lbrace k \in \N \;|\; \forall U \subseteq \Pop \;:\; \cl{U}^k = \cl{U}^{k+1}\rbrace$ (\ie the minimum
number of iteration of the \Cech closure needed to reach a fixed point independently from the starting set). Then a
quasi-metric (\ie a distance without the symmetry property, simply called direction distance in this paper) $f_P:\Pop \times \Pop \to \R_+$ can be defined as:
\[
f_P(P_1,P_2) =
\begin{cases}
  \min\lbrace k \in \N \;|\; P_2 \in \cl{\Set{P_1}}^k \rbrace & \text{if }P_2 \in \cl{\Set{P_1}}^{k^*}\\
  k^* & \text{otherwise}
\end{cases}
\]
To obtain a distance between populations, the function $d_P$ defined as $(P_1,P_2) \mapsto \frac{1}{2} \left( f_P(P_1,P_2) + f_P(P_2,P_1) \right)$ suffices. For any fixed $P \in \Pop$ it is possible to define a distance $\delta_P$ between elements of $\Sigma^\ell$ as:\\
\[
\delta_P(x,y) = d_P\left(\left( P \setminus \Set{x} \right) \cup \Set{y}, \left( P \setminus \Set{y} \right) \cup \Set{x}\right).
\]
In the experimental part of the paper, however, we will use the direction distance $f_p$ since it is more consistent with our idea of the dynamics of crossover.

\section{A Representation for Populations as Lower Sets}
\label{sec:representation_for_populations}

In this section, a way to represent populations as lower sets is introduced. This representation allows us to compute the
previously defined distance in an efficient way (\ie in a time that is polynomial \wrt the size of the populations and
the length of the individuals).

While $\SC{1}$ is a lattice, $\forall n > 1$, $\forall \ell > 2 \lfloor \frac{n}{2} \rfloor + 3$, the $\SC{n}$ poset is
not a lattice (\wrt set inclusion).

\begin{example}
  For example, consider, for a fixed $n$ and $\ell = 2 \lfloor \frac{n}{2} \rfloor + 3$, let $A,B \in \SC{n}$ where $A = \Set{1} \cup \Set{5} \cup \Set{7} \cup \Set{9} \ldots \cup \Set{2 \lfloor \frac{n}{2} \rfloor + 3}$ and $B = \Set{1} \cup \Set{3} \cup \Set{7} \cup \Set{9} \ldots \cup \Set{2 \lfloor \frac{n}{2} \rfloor + 3}$. Recall that $\SC{n}$ cannot have elements that are the union of more than $n$ disjoint sets in the form $[i,j]$ and both $A$ and $B$ are union of $n$ disjoint sets. It is immediate that the \emph{atomic} upper bound of $A$ and $B$ is not unique, since both $ \Set{1} \cup [3,5] \cup \Set{7} \cup \Set{9} \ldots \cup \Set{2 \lfloor \frac{n}{2} \rfloor + 3}$ and $ [1,3] \cup \Set{5} \cup \Set{7} \cup \Set{9} \ldots \cup \Set{2 \lfloor \frac{n}{2} \rfloor + 3}$ are \emph{atomic} upper bounds. Hence they are not the least upper bound, that, by definition, must be unique.
\end{example}

From now on, we fix $n,\ell \in \N$. We now recall some definitions from~\cite{MV12} adapting them to the $n$-points
crossover case.

\begin{definition}
  \label{def:represented}
  Let $x \in \Sigma^\ell$, $A \in \SC{n}$ and $P \in \Pop$. We say that $A$ is represented in $P$ iff $\exists y \in P$ such that $\forall a \in A$ $y_a = x_a$.
\end{definition}

The concept of representation has been extended to populations:

\begin{definition}
  Fix $x \in \Sigma^\ell$. We define $\rxOp: \Pop \to \Parts{\SC{n}}$ as $\rx{P} = \Set{A \in \SC{n} \;|\; A \text{ is represented in } P}$.
\end{definition}

\begin{proposition}
  For all $P \in \Pop$ and for all $x \in \Sigma^\ell$, $\rx{P}$ is a lower set of $\SC{n}$.
\end{proposition}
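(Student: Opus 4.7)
The plan is to unfold both definitions and observe that ``$A$ is represented in $P$'' is a monotone-decreasing condition in $A$: requiring that some $y \in P$ agrees with $x$ on a larger set is strictly harder than requiring agreement on a smaller set. Concretely, fix $P \in \Pop$ and $x \in \Sigma^\ell$. To show $\rx{P}$ is a lower set of $\SC{n}$, take any $A \in \rx{P}$ and any $B \in \SC{n}$ with $B \subseteq A$; I must conclude $B \in \rx{P}$.

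By Definition~\ref{def:represented}, $A \in \rx{P}$ gives a witness $y \in P$ with $y_a = x_a$ for every $a \in A$. Since $B \subseteq A$, the same $y$ satisfies $y_b = x_b$ for every $b \in B$. Hence $B$ is represented in $P$ by the very same individual, so $B \in \rx{P}$. This is essentially the entire argument.

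The only subtlety worth flagging is the ambient poset: we must make sure that $B$ is an element of $\SC{n}$, but this is given by assumption (lower-set closure is only required against comparisons inside the poset $\SC{n}$, and the statement of the proposition uses the $\SC{n}$-ordering, namely set inclusion restricted to $\SC{n}$). So no issue arises from the fact that arbitrary subsets of $A$ need not lie in $\SC{n}$; we only need that those subsets which do lie in $\SC{n}$ inherit representation, and this is exactly what the witness argument above provides.

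Thus no real obstacle appears; the proposition is a direct consequence of the definition of representation together with the trivial observation that agreement of $y$ with $x$ on a set is preserved under passage to subsets. I would therefore write the proof as a two-sentence argument, without invoking any structural property of $\SC{n}$ beyond the fact that its order is set inclusion.
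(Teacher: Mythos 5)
Your argument is correct and is essentially the paper's own proof: both take the witness $y \in P$ representing $A$ and observe that the same $y$ represents any $B \subseteq A$, hence any such $B$ in $\SC{n}$ lies in $\rx{P}$. Your remark about only needing closure against subsets that actually belong to $\SC{n}$ is a fair clarification of a point the paper leaves implicit, but it does not change the approach.
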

\begin{proof}
  Let $A \in \rx{P}$. Hence there exists $y \in P$ such that $A$ is represented in $\{y\}$. It follows from Definition~\ref{def:represented} that any $B \subseteq A$ is also represented in $\{y\}$ and, as a consequence, it is represented in $P$. Hence, $\rx{P}$ is a lower set in $\SC{n}$.
\end{proof}

We now define the notion of the \emph{alternating number} of two elements of $\SC{n}$. The idea is that given $A,B \in
\SC{n}$ we want to find an algorithms that given any two strings $y,z \in \Sigma^\ell$ such that $A \in \rx{\{y\}}$ and
$B \in \rx{\{z\}}$, generates a string $w \in \Sigma^\ell$ such that $A,B \in \rx{\{w\}}$ by scanning the string left to
right and choosing at every position $i \in [1,\ell]$ either $y_i$ or $z_i$. The \emph{alternating number} is the
minimum number of ``switch'' from copying one string to copy the other that such an algorithm must perform when $A$ and $B$
are fixed. Intuitively, if such a number is less than the number of available crossover points then the string $w$ can
be generated by one crossover operation starting from two strings, the first one having $A$ in its representation and the
second one having $B$.

\begin{definition}
  Let $x \in \Sigma^\ell$, and let $A,B\in \SC{n}$. The \emph{crossover language of  these $A$ and $B$}, denoted by $\mathcal{L}_{A,B} \subseteq \Gamma^\ell$ for the alphabet $\Gamma = \{a,b\}$, is defined as:
  \begin{align*}
    \forall w \in \{a,b\}^\ell &&  w \in \mathcal{L}_{A,B} \iff \forall i \in [1,\ell]\;
    & (i \in A \wedge i \notin B \implies w_i = a) \\
    && \wedge & (i \notin A \wedge i \in B \implies w_i = b)
  \end{align*}
  The \emph{alternating number} of $\mathcal{L}_{A,B}$ (denoted by $\alt{\mathcal{L}_{A,B}}$) is the smallest $m \in \N$ such that there exists $w \in \mathcal{L}_{A,B}$ with where the symbols $a$ and $b$ alternates $m$ times.
\end{definition}

We are now going to define a function remapping lower sets of $\SC{n}$ to lower sets of $\SC{n}$. Intuitively, this
function will transform the representation of a population $P$ in the representation of another population that is the
union of all the populations in the closure of $P$.

\begin{definition}
  We define $\mclOp: \DownSets{\SC{n}} \to \DownSets{\SC{n}}$ as follows. For all $U \in \DownSets{\SC{n}}$
  \[
  \mcl{U} = \lbrace A \in \SC{n} \;|\; \exists B_1,B_2 \in U \text{ s.t. } A = B_1 \cup B_2 \text{ and } \alt{\mathcal{L}_{B_1,B_2}} \le n \rbrace
  \]
\end{definition}

The main result is the following since it allows us to compute the minimum number of iterations of the closure
necessary to obtain a certain element of $\Sigma^\ell$ by iterating the function $\mclOp$.

\begin{proposition}
  For all $x \in \Sigma^\ell$ the following diagram commutes:
  \[
  \xymatrix{
    \Pop \ar[rr]^{P \mapsto \bigcup \cl{\Set{P}}} \ar[d]_{\rxOp} && \Pop \ar[d]^{\rxOp}\\
    \DownSets{\SC{n}} \ar[rr]^{\mclOp} && \DownSets{\SC{n}}
  }
  \]
\end{proposition}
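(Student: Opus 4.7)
The plan is to prove $\rx{\bigcup\cl{\Set{P}}}=\mcl{\rx{P}}$ as two set inclusions, leveraging the characterisation that $y\in\bigcup\cl{\Set{P}}$ iff $y$ is produced by a single $n$-points crossover of some pair of parents in $P$ (obtained from the closure definition by taking $P'=\Set{y}$).

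For $\mcl{\rx{P}}\subseteq\rx{\bigcup\cl{\Set{P}}}$, I would start from a decomposition $A=B_1\cup B_2$ with $B_1,B_2\in\rx{P}$ and $\alt{\mathcal{L}_{B_1,B_2}}\le n$, pick $y_1,y_2\in P$ representing $B_1,B_2$, and choose a word $w\in\mathcal{L}_{B_1,B_2}$ realising at most $n$ alternations. Splicing the parents along $w$ --- setting $y_i=(y_1)_i$ when $w_i=a$ and $y_i=(y_2)_i$ when $w_i=b$ --- produces an offspring of $y_1,y_2$ under an $n$-points crossover whose crossover points are exactly the alternation positions of $w$, hence $y\in\bigcup\cl{\Set{P}}$. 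A case split on whether $i\in B_1\setminus B_2$, $i\in B_2\setminus B_1$ or $i\in B_1\cap B_2$ then confirms $y_i=x_i$ throughout $A$, so $y$ represents $A$.

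For the reverse inclusion, I would start from a witness $y\in\bigcup\cl{\Set{P}}$ representing $A$, produced from parents $y_1,y_2\in P$ with crossover points $k_0=0,k_1,\dots,k_n,k_{n+1}=\ell$. Writing $D_j$ for the union of the segments whose source is $y_j$, the natural partition $B_j=A\cap D_j$ satisfies $A=B_1\cup B_2$, and $(y_j)_i=y_i=x_i$ on $B_j$ ensures that each $B_j$ is represented by $y_j\in P$. The $\Set{a,b}$-mask induced by the crossover pattern lies in $\mathcal{L}_{B_1,B_2}$ and alternates precisely at the effective crossover points, giving $\alt{\mathcal{L}_{B_1,B_2}}\le n$.

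The delicate step, which I anticipate as the main obstacle, is verifying that $B_1,B_2$ themselves lie in $\SC{n}$: naively intersecting the at-most-$n$ intervals of $A$ with the source regions $D_j$ can split $A$ into more than $n$ pieces. Overcoming this requires a refined choice of $B_1,B_2$ that exploits coincident crossover points, segments not meeting $A$, and positions where $y_1$ and $y_2$ already coincide with $x$, so that such positions can be shifted between $B_1$ and $B_2$ to merge adjacent components and bring the interval count back down to at most $n$.
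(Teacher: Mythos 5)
Your overall strategy --- proving the two inclusions $\mcl{\rx{P}}\subseteq\rx{\bigcup\cl{\Set{P}}}$ and $\rx{\bigcup\cl{\Set{P}}}\subseteq\mcl{\rx{P}}$, with the splicing construction for the former and the parent-mask decomposition for the latter --- is exactly the paper's, and your treatment of the first inclusion is complete and agrees with the paper's part~2. The problem is the second inclusion, precisely at the point you flag yourself: you observe that the naive sets $B_j=A\cap D_j$ may have more than $n$ maximal intervals and hence fail to lie in $\SC{n}$, and you defer the repair to a ``refined choice'' of $B_1,B_2$. That step cannot be postponed, and in fact no refinement works in general, because the positions of $A$ inside a given segment may be matched by one parent only, leaving nothing to shift. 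Concretely, take $n=2$, $\ell=5$, $\Sigma=\Set{0,1}$, $x=11111$ and $P=\Set{10101,\,01000}$. The $2$-points crossover of these parents with $k_1=1$, $k_2=2$ produces $y=11101$, so $A=\Set{1,2,3,5}=[1,3]\cup[5,5]\in\SC{2}$ lies in $\rx{\bigcup\cl{\Set{P}}}$. On the other hand, the only element of $\rx{P}$ containing position $2$ is $\Set{2}$, so any decomposition $A=B_1\cup B_2$ with $B_1,B_2\in\rx{P}$ would force the other set to contain $\Set{1,3,5}$, which has three maximal intervals and is not in $\SC{2}$; hence $A\notin\mcl{\rx{P}}$, and the two paths of the diagram disagree on this input, before the alternation bound even enters the picture.

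For what it is worth, you have put your finger on exactly the spot where the paper's own proof is silent: its part~1 simply asserts the existence of $B\in\rx{\Set{z}}$ and $C\in\rx{\Set{v}}$ with $A=B\cup C$ and $w\in\mathcal{L}_{B,C}$, without verifying membership in $\SC{n}$, and the example above shows that this assertion can fail for $n\ge 2$ (for $n=1$ it is harmless, since an interval intersected with a prefix and a suffix yields intervals, which is why the original one-point argument goes through). So your proposal cannot be completed as written, and the obstacle is not one of cleverness in choosing $B_1,B_2$: salvaging the statement would require modifying the definitions (for instance enlarging the codomain of $\rxOp$ beyond $\SC{n}$, or allowing $\mclOp$ to combine more than two sets), not merely refining the decomposition.
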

\begin{proof} Fix $x \in \Sigma^\ell$, $P \in \Pop$, and $A \in\SC{n}$. The proof is divided into two parts:\\
  1) $A \in \rx{\bigcup\cl{\{P\}}} \Rightarrow A \in \mcl{\rx{P}}$.\\
  Let $A$ be in $\rx{\bigcup\cl{\{P\}}}$. Then there exists $y \in \bigcup\cl{\{P\}}$ obtained by the $n$-points crossover of two elements $z,v \in P$ and such that $A \in \rx{\{y\}}$. Consider the word $w \in \{a,b\}^\ell$ defined as:
  \[
  \forall i \in [1,\ell] \qquad w_i =
  \begin{cases}
    a & \text{if $y_i$ was taken from $z$}\\
    b & \text{if $y_i$ was taken from $y$}
  \end{cases}
  \]
  Since $y$ has been obtained by $n$-points crossover, the word has at most $n$ alternations. Furthermore, there exists $B \in \rx{\{z\}}$ and $C \in \rx{\{v\}}$ such that $A = B \cup C$ and such that $w \in \mathcal{L}_{B,C}$ (it follows the fact that $A$ is in the representation of $\{y\}$ and that it has been obtained from the crossover of $z$ and $v$). Hence, $A \in \mcl{\rx{P}}$.

  2) $A \in \mcl{\rx{P}} \Rightarrow A \in \rx{\bigcup\cl{\{P\}}}$.\\
  Let $A \in \mcl{\rx{P}}$. Then there exists $B_1,B_2 \in \rx{P}$ such that $B_1 \cup B_2 = A$ and $\alt{\mathcal{L}_{B_1,B_2}} \le n$. By the definition of $\rxOp$ there exists $z,v \in P$ such that $B_1 \in \rx{\{z\}}$ and $B_2 \in \rx{\{v\}}$. We claim that there exists $y \in \rx{\bigcup\cl{\{P\}}}$ obtained from the $n$-points crossover of $z,v$ such that $A \in \rx{\{y\}}$. Let $w = a^{k_1}b^{k_2}\ldots b^{k_h} \in \mathcal{L}_{B_1,B_2}$ be a word with $\mathcal{L}_{B_1,B_2}$ with $h \le n$ (this word exists by hypothesis). It is possible to see that by choosing as crossover points between $z$ and $v$ the positions $k_1,\ldots,k_h$, the obtained element $y$ is such that $A \in \rx{\{y\}}$.
\end{proof}

With the observation that $\forall P \in \Pop$ and $\forall x \in \Sigma^\ell$, $[1,\ell] \in \rx{P}$ iff $x \in P$ we
can conclude that for all $P \in \Pop$ and for all $x \in \Sigma^\ell$, $\min\lbrace k \in \N \;|\; [1,\ell] \in
\mclIter{\rx{P_2}}{k}\rbrace$ is equal to $\min\lbrace k \in \N \;|\; \Set{x} \subseteq S_k(P) \rbrace$. Notice that the
previous proposition also implies that $\mclOp$ also remaps lower sets to lower sets, a condition that was not proved
when the function was defined.

\begin{remark}
  \label{rmk:fixed_point}
  Note that since the function $\mclOp$ is such that $\forall A \in \Parts{\SC{n}}$, $\mcl{A} \supseteq A$ and the poset $\SC{n}$ is finite, the dynamics induced by the iteration of $\mclOp$ always reaches a fixed point (\ie an equilibrium point). Trivially, there are no cyclic points.
\end{remark}

\subsection{The computational complexity of computing the distance between two populations}

The computational complexity of determining the distance between two populations, using the proposed representation, is
polynomial in the size of the individuals and in the size of the populations, as we are going to show. The presented bound are not tight, but this is not necessary for showing that the computation can be performed in polynomial time.

Let $P_1$ and $P_2$ be two populations. To compute $f_P(P_1,P_2)$ we obtain the following time complexity bounds:
\begin{enumerate}
  \item For each element $x$ in $P_2$, it is necessary to build the poset $\SC{n}$, which has size $O(\ell^{2n})$ (\ie polynomial in the length of the individuals but exponential in the number of crossover points - that we have assumed to be fixed). Hence, the time required for this step is linear with respect to $|P_2|$ and polynomial with respect to $\ell$.
  \item For each partial order $\SC{n}$ with the associated element $x \in P_2$, it is necessary to computer $\rx{P_1}$, which can be performed by checking every individual in $P_2$ with every element of $\SC{n}$. Hence, the number of steps necessary will be, for each $x \in P_2$, polynomial with respect to the size of $\SC{n}$ (and, hence, with respect to $\ell$), and $|P_1|$.
  \item Finally, computing $\mclOp$ is polynomial with respect to the size of $\SC{n}$, since it can be performed by checking all the pairs of elements in $\SC{n}$. Since $\SC{n}$ is monotone, it cannot be iterated more than $|\SC{n}|$ times before reaching a fixed point, thus still giving a polynomial time bound. In fact, by adapting a result in~\cite{MV12}, it is possible to show that the number of iterations is at most logarithmic with respect to $\ell$.
\end{enumerate}
In conclusion, $f_p(P_1,P_2)$ can be computed in polynomial time with respect to the size $\ell$ of the individuals and the size of the $P_1$ and $P_2$. More precise bounds can be obtained by exactly specifying the data structures and representations used while implementing the algorithm.

\section{Experimental Results on the Distance Distribution}
\label{sec:experiments}

In this section, we perform a comparison of the difference distances distribution obtained for a different number of crossover points on an $8$-bit individual. This experimental exploration is necessary to check if the proposed distance is significantly different for an increasing number of crossover points. Intuitively, a higher number of crossover points should increase the ability to produce new individuals, thus decreasing the average distance.

One first obstacle in the experimental design is to determine how to compute a distance between two individuals when no population is given. Therefore, in order to calculate the (directional) crossover distance, for each one of the $2^{8}$ possible individuals we have used $100$ small populations of~$4$ individuals each (generated randomly), to which we have added the individual for which we want to estimate the distance to the optimum, i.e., a population consisting only of the individual $11111111$. The reported distance measure is then, for each individual, the average of the $100$~different values obtained in this way.

The results are presented from~Figure~\ref{fig:distance-2} to~Figure~\ref{fig:distance-7}. Each figure shows a comparison of the distance distribution for $1$-point and a multiple points crossover (from $2$ crossover points in Figure~\ref{fig:distance-2} to a maximum of $7$ crossover points in Figure~\ref{fig:distance-7}). The average and variance of each distance distribution are summarized in Table~\ref{tab:stats} and Figure~\ref{fig:stats}.

\begin{table}[h]
  \centering
  \begin{tabular}{rrrrrrrr}
    & $n = 1$ & $n = 2$ & $n = 3$ & $n = 4$ & $n = 5$ & $n = 6$ & $n = 7$ \\
    \hline
    \textbf{Average} & $2.3771$ &  $2.2284$ &  $2.0211$ & $1.9536$ & $1.9387$ & $1.9364$ & $1.9363$ \\
    \textbf{Variance} & $0.1026$ & $0.0923$ & $0.1021$ & $0.0999$ & $0.0983$ & $0.0980$ & $0.0979$ \\
  \end{tabular}
  \caption{\label{tab:stats} The average and the variance for a different number of crossover points on an $8$-bit string. The average distance decreases and appears to converge at a value of approximately~$1.93$ from the top (upper approximation).}
\end{table}

\begin{figure}[h]
  \centering
  \includegraphics[width=0.48\textwidth]{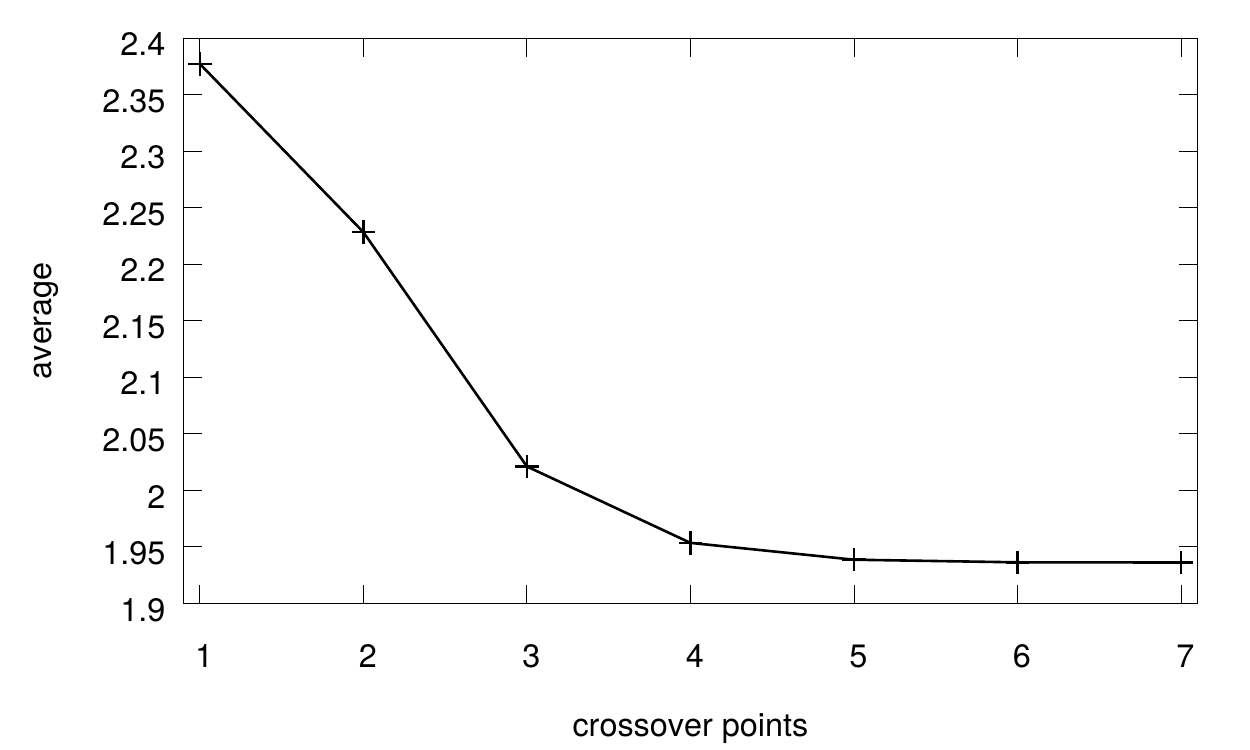}\hfill
  \includegraphics[width=0.48\textwidth]{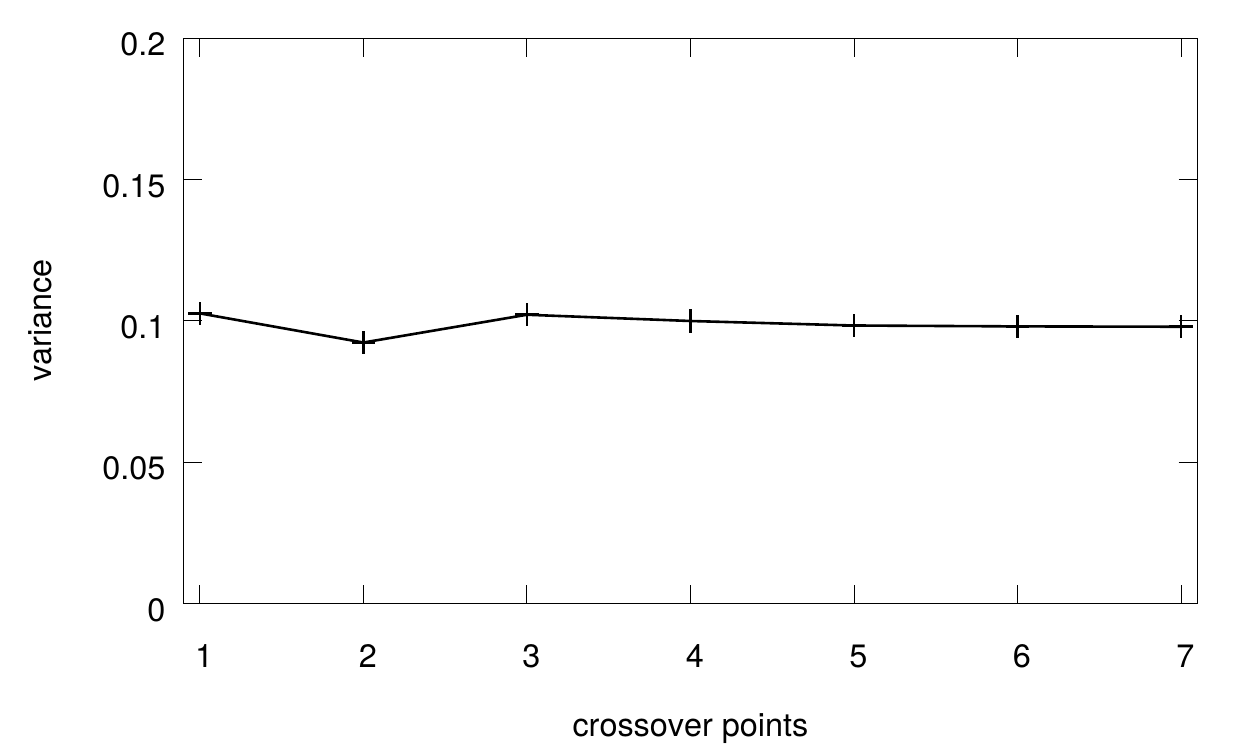}
  \caption{\label{fig:stats} How the average and the variance change when the number of crossover points increases. While the former decreases, converging to $1.93$, the latter remains stable.} 
\end{figure}

As it is possible to observe, the shape of the distribution is similar to a Gaussian distribution in all the cases, with the obvious difference that the optimum has always distance $0$ from itself. This can be observed in more details in Figure~\ref{fig:gaussians} and Table~\ref{tab:peak}, where the fitting of the obtained results to a Gaussian distribution has been reported.

\begin{figure}[h]
  \centering
  \includegraphics[width=0.98\textwidth]{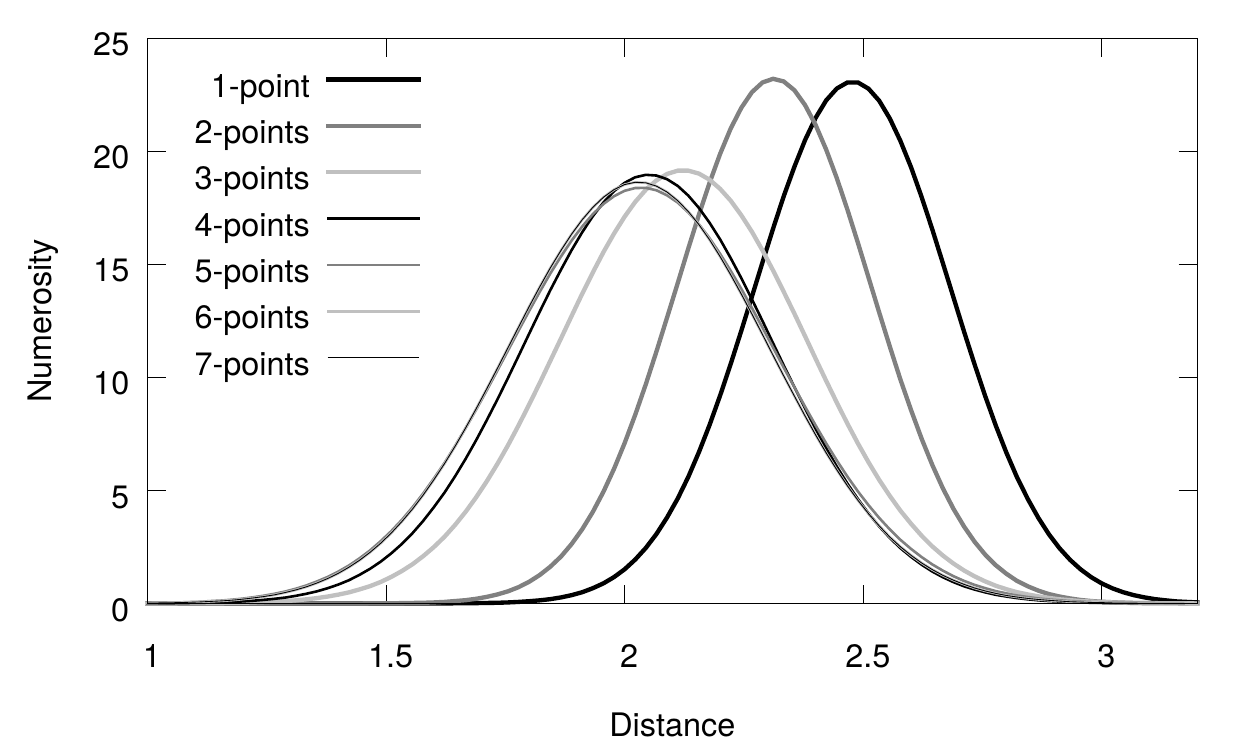}
  \caption{\label{fig:gaussians}A comparison of the distance distribution (each fitted to a Gaussian using bins of size $0.05$) from $1$ to $7$ crossover points.}
\end{figure}

\begin{table}[h]
  \centering
  \begin{tabular}{rrrrrrrr}
    & $n = 1$ & $n = 2$ & $n = 3$ & $n = 4$ & $n = 5$ & $n = 6$ & $n = 7$ \\
    \hline
    \textbf{Peak value} & $23.068$ & $23.196$ &  $19.154$ & $18.960$ & $18.386$ & $18.625$ & $18.625$ \\
  \end{tabular}
  \caption{\label{tab:peak}The peak values of the fitted Gaussians (using bins of size $0.05$) for a different number of crossover points on an $8$-bit string.}
\end{table}

The average decreases monotonically with the increase of crossover points used, up to $7$, the maximum possible for $8$-bits individuals. In particular, the average appears to converge to a value of $1.93$, where no possible improvements are possible. While the improvements from $1$ to $2$ and from $2$ to $3$ crossover points are quite large, successive increases in the number of crossover points do not produce improvements of a similar magnitude. Therefore, we can observe that there are diminishing returns when increasing the number of crossover points. This is intuitively explainable in the following way: moving from $1$ to $2$ crossover points greatly augments the possibilities to generate new individuals in less time for many cases (e.g., $11100111$ and $00011000$ can be used to generate the optimum in one step with $2$-points crossover, while at least two steps are required for $1$-point crossover). When the number of crossover points is already high, the cases in which additional crossover points can actually decrease the number of generations necessary to reach the optimum are fewer and their contribution to the average is therefore reduced.

\begin{figure}
  \centering
  \includegraphics[width=0.98\textwidth]{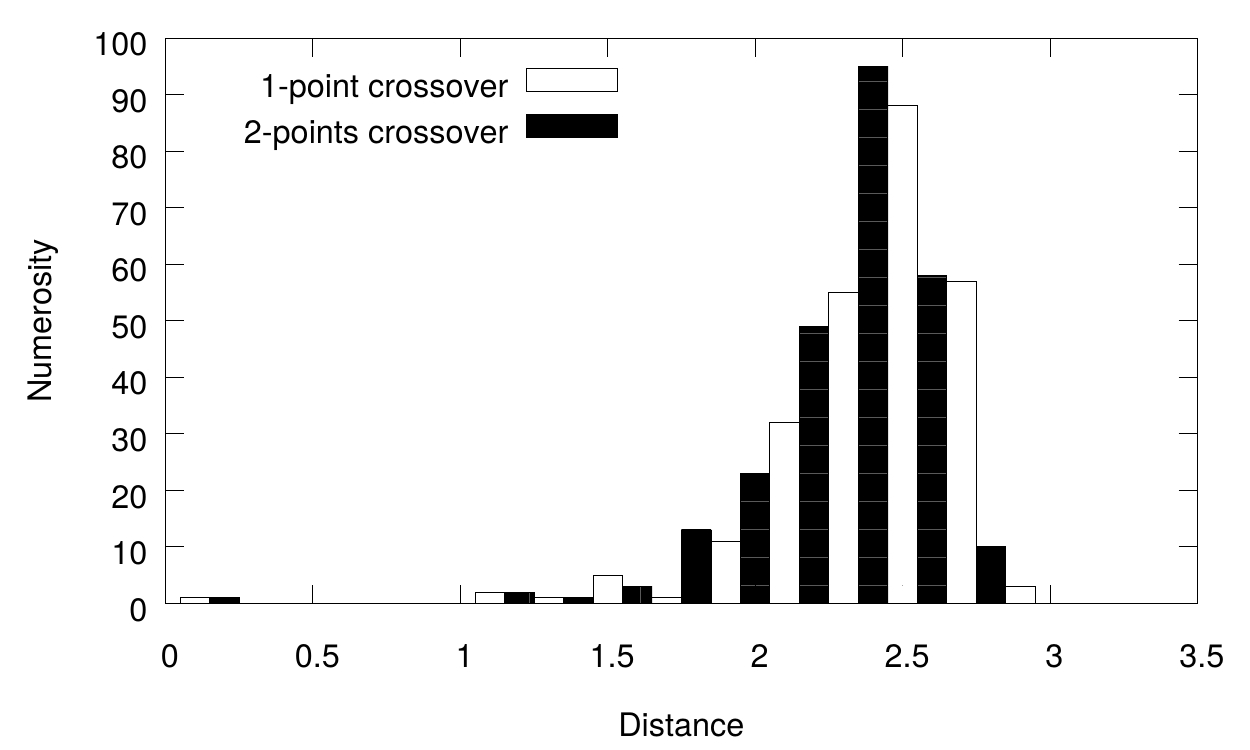}
  \caption{\label{fig:distance-2}A comparison of the distance distribution between the $2$-points and $1$-point crossovers.}
\end{figure}

\begin{figure}
  \centering
  \includegraphics[width=0.98\textwidth]{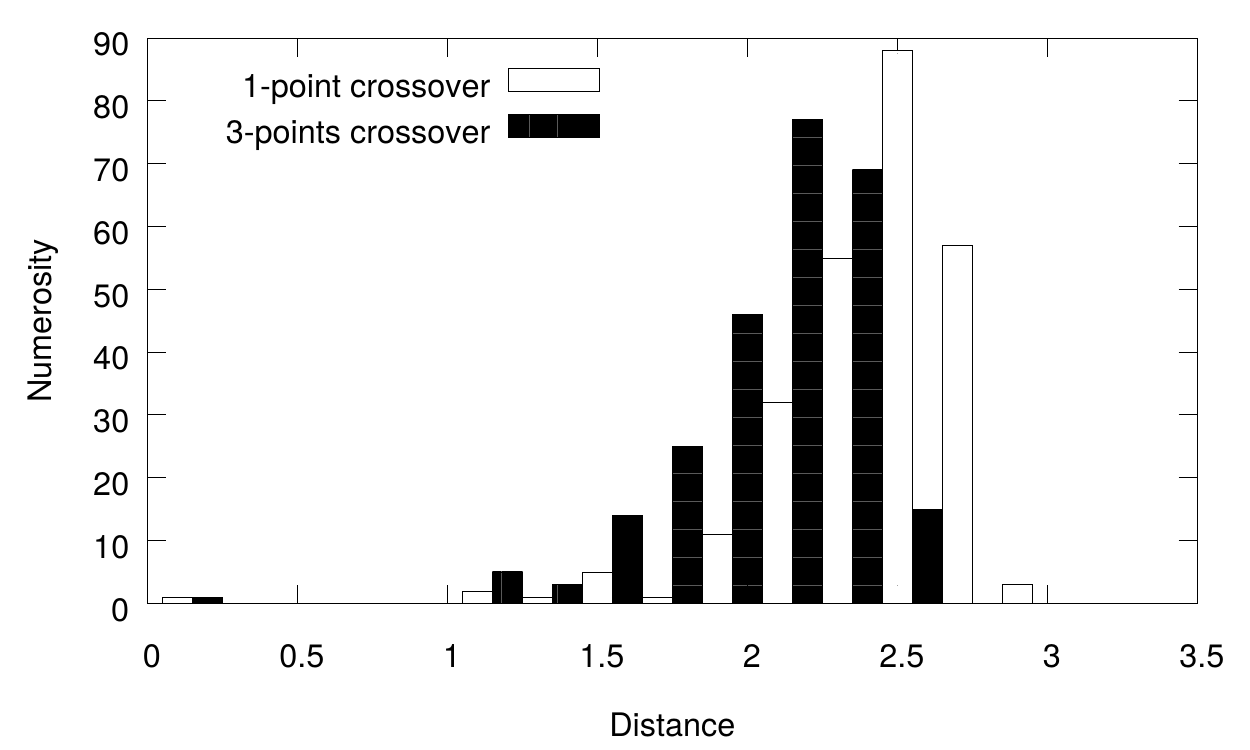}
  \caption{\label{fig:distance-3}A comparison of the distance distribution between the $3$-points and $1$-point crossovers.}
\end{figure}

\begin{figure}
  \centering
  \includegraphics[width=0.98\textwidth]{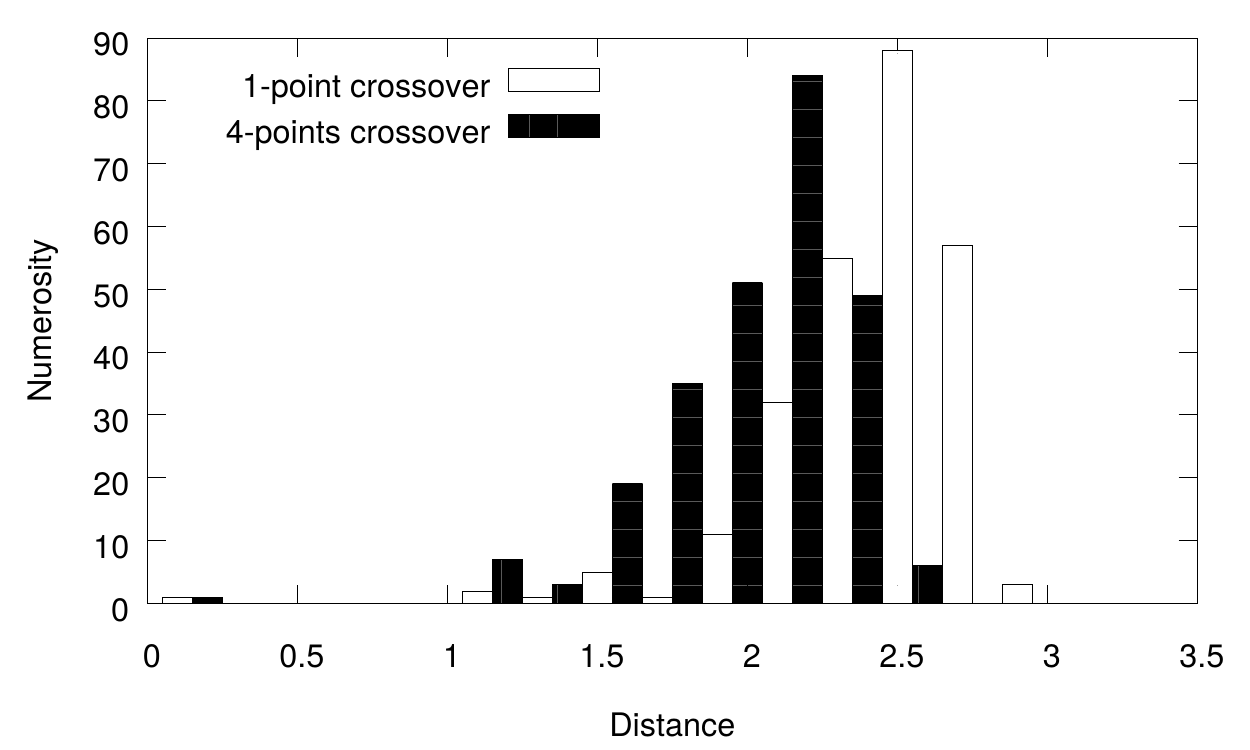}
  \caption{\label{fig:distance-4}A comparison of the distance distribution between the $4$-points and $1$-point crossovers.}
\end{figure}

\begin{figure}
  \centering
  \includegraphics[width=0.98\textwidth]{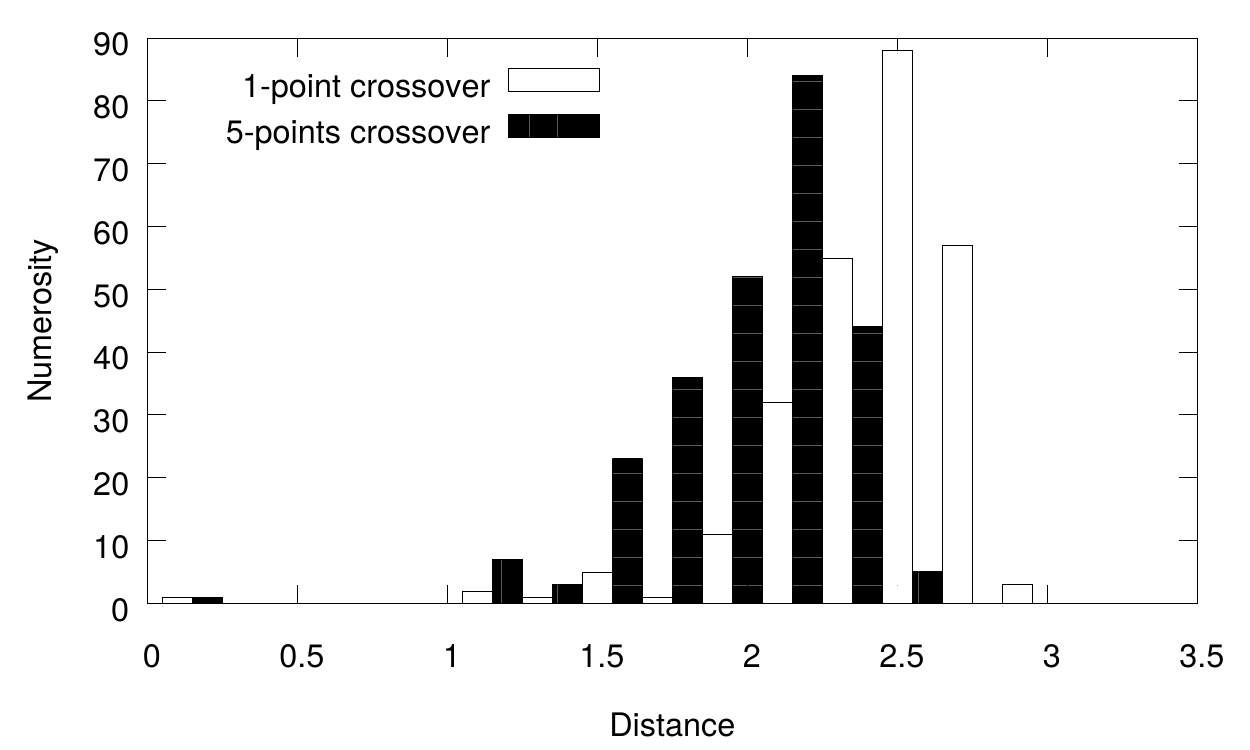}
  \caption{\label{fig:distance-5}A comparison of the distance distribution between the $5$-points and $1$-point crossovers.}
\end{figure}

\begin{figure}
  \centering
  \includegraphics[width=0.98\textwidth]{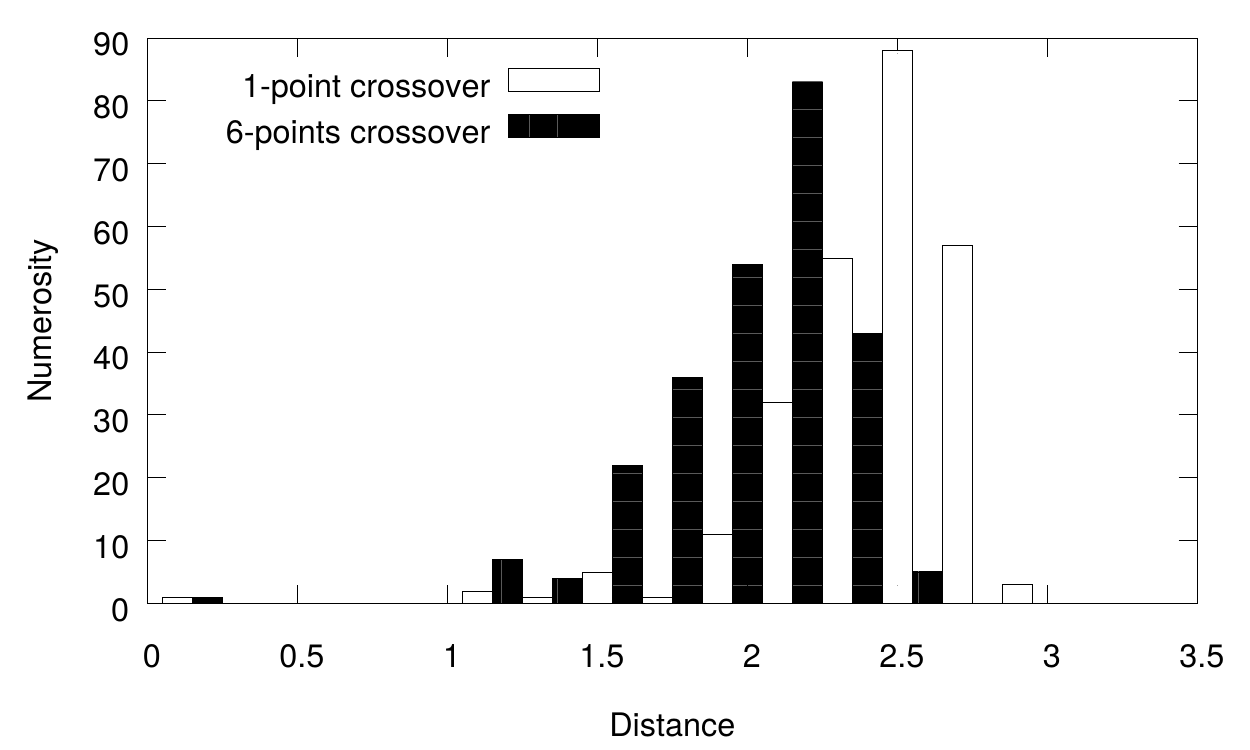}
  \caption{\label{fig:distance-6}A comparison of the distance distribution between the $6$-points and $1$-point crossovers.}
\end{figure}

\begin{figure}
  \centering
  \includegraphics[width=0.98\textwidth]{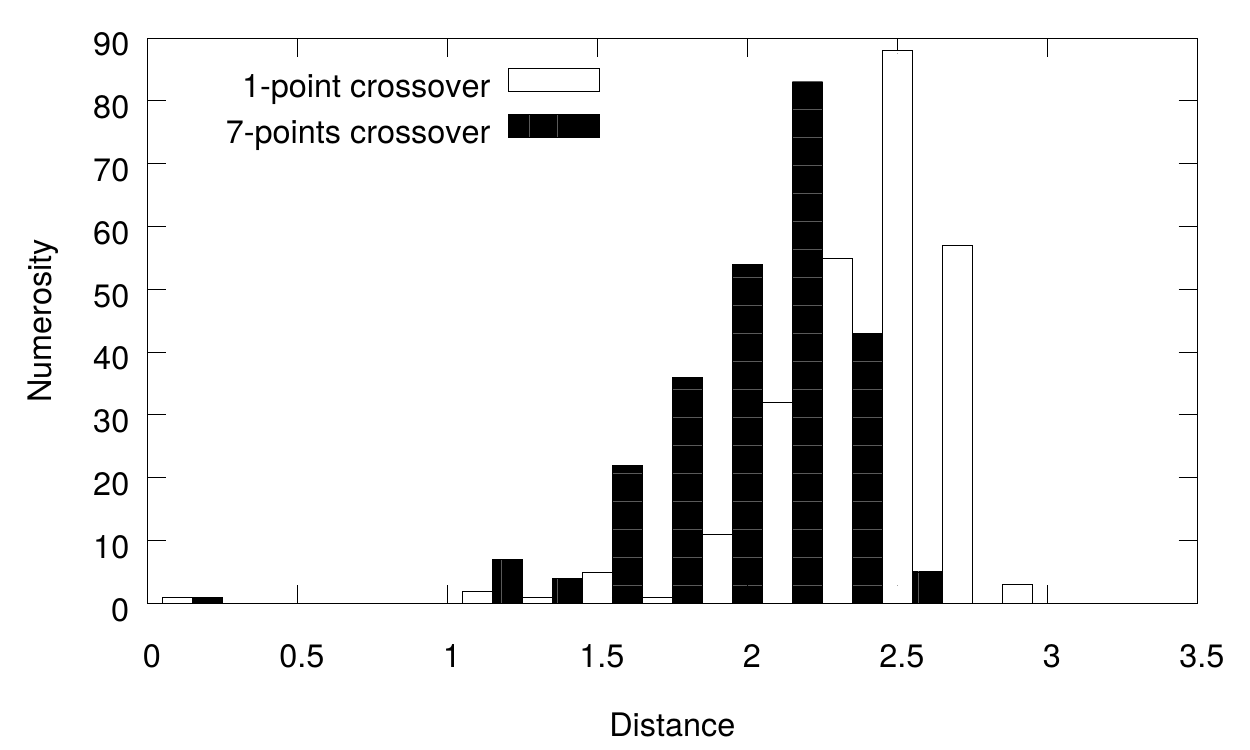}
  \caption{\label{fig:distance-7}A comparison of the distance distribution between the $7$-points and $1$-point crossovers.}
\end{figure}

It is interesting to remark that $n$-points crossover can be considered as a ``parallel version'' of one-point crossover, in which $n$ one-point crossover operations take place in parallel (as can be seen in Example~\ref{ex:xo_rel}). Hence, the study of the relations between different distances can be interesting to better understand the effects of this parallelization.

\section{Further Remarks and Contributions}
\label{sec:further_remarks}

In this paper, a recent model for one-point crossover in GA has been generalized to $n$-points crossover. We have shown
that when the kind of crossover is fixed, the distance can be computed in polynomial time \wrt both population size and
individual length. This result indicates that the structures used for modeling one-point crossover can be generalized to
deal with $n$-points crossover. Hence, the proposed model is not limited to a specific case and the results on the
polynomial complexity in time can be extended to more general kinds of crossover. In order to experimentally study the proposed distance, we have showed how the distance distribution changes with different numbers of crossover points.  

Future works will involve a more in-depth study of this model and, in general, an investigation of the conditions that a certain structure must satisfy for modeling crossover in GAs. It will also be the focus of future studies to determine what is a good trade-off between minimizing the number of crossover points and minimizing the average of the distance value; it would be interesting to observe if there is a correlation between these variations on the average of the distance and the performance of a GA on synthetic or real-world problems. Finally, a general way of extending this model to other evolutionary algorithms should be devised.

\bibliographystyle{elsarticle-num}
\bibliography{bibliography}

\end{document}